\theoremstyle{definition}
\newtheorem{definition}{Definition}
\newtheorem{theorem}{Theorem}
\newlist{inlinelist}{enumerate*}{1}
\setlist*[inlinelist,1]{%
  label=(\roman*),
}
\newcolumntype{L}[1]{>{\raggedright\let\newline\\\arraybackslash\hspace{0pt}}m{#1}}
\newcolumntype{C}[1]{>{\centering\let\newline\\\arraybackslash\hspace{0pt}}m{#1}}
\newcolumntype{R}[1]{>{\raggedleft\let\newline\\\arraybackslash\hspace{0pt}}m{#1}}
\begin{document}
\title{Less Is More: A Comprehensive Framework for \\the Number of Components of Ensemble Classifiers}
%
%
% author names and IEEE memberships
% note positions of commas and nonbreaking spaces ( ~ ) LaTeX will not break
% a structure at a ~ so this keeps an author's name from being broken across
% two lines.
% use \thanks{} to gain access to the first footnote area
% a separate \thanks must be used for each paragraph as LaTeX2e's \thanks
% was not built to handle multiple paragraphs
%

\author{Hamed~Bonab,
        and Fazli~Can% <-this % stops a space
\thanks{This is an extended version of the work presented as a short paper at the Conference on Information and Knowledge Management (CIKM), 2016 \cite{bib:bonabcikm}.}% <-this % stops a space
\thanks{Manuscript submitted July 2018.}}
% Manuscript received September 2017; revised July 2018.
% note the % following the last \IEEEmembership and also \thanks - 
% these prevent an unwanted space from occurring between the last author name
% and the end of the author line. i.e., if you had this:
% 
% \author{....lastname \thanks{...} \thanks{...} }
%                     ^------------^------------^----Do not want these spaces!
%
% a space would be appended to the last name and could cause every name on that
% line to be shifted left slightly. This is one of those "LaTeX things". For
% instance, "\textbf{A} \textbf{B}" will typeset as "A B" not "AB". To get
% "AB" then you have to do: "\textbf{A}\textbf{B}"
% \thanks is no different in this regard, so shield the last } of each \thanks
% that ends a line with a % and do not let a space in before the next \thanks.
% Spaces after \IEEEmembership other than the last one are OK (and needed) as
% you are supposed to have spaces between the names. For what it is worth,
% this is a minor point as most people would not even notice if the said evil
% space somehow managed to creep in.

% The paper headers
\markboth{IEEE TRANSACTIONS ON NEURAL NETWORKS AND LEARNING SYSTEMS,~VOL.~14, No.~8, July~2018}%
{Shell \MakeLowercase{\textit{et al.}}: Bare Demo of IEEEtran.cls for IEEE Journals}

% make the title area
\maketitle

\begin{abstract}
The number of component classifiers chosen for an ensemble greatly impacts the prediction ability. In this paper, we use a geometric framework for \textit{a priori} determining the ensemble size, which is applicable to most of existing batch and online ensemble classifiers. There are only a limited number of studies on the ensemble size examining Majority Voting (MV) and Weighted Majority Voting (WMV). Almost all of them are designed for batch-mode, hardly addressing online environments. Big data dimensions and resource limitations, in terms of time and memory, make determination of ensemble size crucial, especially for online environments. For the MV aggregation rule, our framework proves that the more strong components we add to the ensemble, the more accurate predictions we can achieve. For the WMV aggregation rule, our framework proves the existence of an ideal number of components, which is equal to the number of class labels, with the premise that components are completely independent of each other and strong enough. While giving the exact definition for a strong and independent classifier in the context of an ensemble is a challenging task, our proposed geometric framework provides a theoretical explanation of diversity and its impact on the accuracy of predictions. We conduct a series of experimental evaluations to show the practical value of our theorems and existing challenges. 
\end{abstract}

\begin{IEEEkeywords}
Supervised learning, ensemble size, ensemble cardinality, voting framework, big data, data stream, law of diminishing returns
\end{IEEEkeywords}

\IEEEpeerreviewmaketitle

\section{Introduction}
\IEEEPARstart{O}{ver} the last few years, the design of learning systems for mining the data generated from real-world problems has encountered new challenges such as the high dimensionality of big data, as well as growth in volume, variety, velocity, and veracity---the four V's of big data\footnote{http://www.ibmbigdatahub.com/infographic/four-vs-big-data}. In the context of data dimensions, volume is the amount of data, variety is the number of types of data, velocity is the speed of data, and veracity is the uncertainty of data; generated in real-world applications and processed by the learning algorithm. The dynamic information processing and incremental adaptation of learning systems to the temporal changes are among the most demanding tasks in the literature for a long time \cite{can1993incremental}.  

% of ---  Incremental processing of temporal changes in dynamic collections has been in the literature for a long time.(suggestion olarak koydum bir onceki cumleyi see comment ref?)

% \begin{figure}
% 	\centerline{\includegraphics[width=0.8\linewidth]{}}
% 	\caption{ The evolution of data in three major dimensions over the recent past years, named as the 3Vs, poses challenges to the design of learning systems (obtained from \cite{bib:picsource}).} 
% 	\label{fig:3vs}
% \end{figure}      

Ensemble classifiers are among the most successful and well-known solutions to supervised learning problems, particularly for online environments \cite{dietterich2000ensemble,bib:gama2010book,bib:wang2015}. The main idea is to construct a collection of individual classifiers, even with weak learners, and combine their votes. The aim is to build a stronger classifier, compared to each individual component classifier \cite{bib:surveygama2017}. The training mechanism of components and the vote aggregation method mostly characterize an ensemble classifier \cite{bib:rokach2010ensemble}.  

There are two main categories of vote aggregation methods for combining the votes of component classifiers: weighting methods and meta-learning methods \cite{bib:rokach2010ensemble,bib:chan1999weighted}. Weighting methods assign a combining weight to each component and aggregate their votes based on these weights (e.g. Majority Voting, Performance Weighting, Bayesian Combination). They are useful when the individual classifiers perform the same task and have comparable success. Meta-learning methods refer to learning from the classifiers and from the classifications of these classifiers on training data (e.g. Stacking, Arbiter Trees, Grading). They are best suited for situations where certain classifiers consistently mis-classify or correctly classify certain instances \cite{bib:rokach2010ensemble}. In this paper we study the ensembles with the weighting combination rule. Meta-learning methods are out of the scope of this paper. 

An important aspect of ensemble methods is to determine how many component classifiers should be included in the final ensemble, known as the ensemble size or ensemble cardinality \cite{bib:bonabcikm,bib:rokach2010ensemble,bib:latinne2001limiting,hu2001using,bib:oshiro2012many,bib:hernandez2013large,bib:surveybifet2017}. The impact of ensemble size on \textit{efficiency} in terms of time and memory and \textit{predictive performance} make its determination an important problem \cite{tsoumakas2008taxonomy,gomes2014mining}. Efficiency is especially important for online environments. In this paper, we extend our geometric framework \cite{bib:bonabcikm} for pre-determining the ensemble size, applicable to both batch and online ensembles. 

Furthermore, diversity among component classifiers is an influential factor for having an accurate ensemble \cite{bib:rokach2010ensemble,bib:jackowski2018new,kuncheva2004combining,kuncheva2003measures}. Liu et al. \cite{liu2004empirical} empirically studied ensemble size on diversity. Hu \cite{liu2004empirical} explained that component diversity leads to uncorrelated votes, which in turn improves predictive performance. However, to the best of our knowledge, there is no explanatory theory revealing how and why diversity among components contributes toward overall ensemble accuracy \cite{brown2005diversity}. Our proposed geometric framework introduces a theoretical explanation for the understanding of diversity in the context of ensemble classifiers.

\noindent The main contributions of this study are the following. We
\begin{itemize}
    \item Present a brief comprehensive review of existing approaches for determining the number of component classifiers of ensembles,   
    \item Provide a spatial modeling for ensembles and use the linear least squares (LSQ) solution \cite{bib:lsqbook} for optimizing the weights of components of an ensemble classifier, applicable to both online and batch ensembles, 
    \item Exploit the geometric framework for the first time in the literature, for \textit{a priori} determining the number of component classifiers of an ensemble,
    \item Explain the impact of diversity among component classifiers of an ensemble on the predictive performance, from a theoretical perspective and for the first time in the literature,
    \item Conduct a series of experimental evaluations on more than 16 different real-world and synthetic data streams and show the practical value of our theorems and existing challenges.  
\end{itemize}

\section{Related Works}
The importance of ensemble size is discussed in several studies. There are two categories of approaches in the literature for determining ensemble size. Several ensembles \textit{a priori} determine the ensemble size with a fixed value (like bagging), while others try to determine the best ensemble size dynamically by checking the impact of adding new components to the ensemble \cite{bib:rokach2010ensemble}. Zhou et al. \cite{zhou2002ensembling} analyzed the relationship between an ensemble and its components, and concluded that aggregating \textit{many} of the components may be the better approach. Through an empirical study, Liu et al. \cite{liu2004empirical} showed that a subset of the components of a larger ensemble can perform comparably to the full ensemble, in terms of accuracy and diversity. Ula{\c{s}} et al. \cite{ulacs2009incremental} discussed approaches for incrementally constructing a batch-mode ensemble using different criteria including accuracy, significant improvement, diversity, correlation, and the role of search direction. 

This led to the idea in ensemble construction, that it is sometimes useful, to let the ensemble extend unlimitedly and then prune the ensemble in order to get a more effective ensemble \cite{bib:rokach2010ensemble,rokach2009collective,margineantu1997pruning,toraman2012,bib:nse,windeatt2013ensemble}. Ensemble selection methods developed as pruning strategies for ensembles. However, with today's data dimensions and resource constraints, the idea seems impractical. Since the number of data instances grow exponentially, especially in online environments, there is a potential problem of approaching an infinite number of components for an ensemble. As a result, determining an upper bound for the number of components with a reasonable resource consumption is essential. As mentioned in \cite{bib:decbound}, the errors cannot be arbitrarily reduced by increasing the ensemble size indefinitely.

There are a limited number of studies for batch-mode ensembles. Latinne et al. \cite{bib:latinne2001limiting} proposed a simple empirical procedure for limiting the number of classifiers based on the McNemar non-parametric test of significance. Similar approaches \cite{bib:fumera-ref2,bib:fumera}, suggested a range of 10 to 20 base classifiers for bagging, depending on the base classifier and dataset. 

Oshiro et al. \cite{bib:oshiro2012many} cast the idea that there is an ideal number of component classifiers within an ensemble. They defined the ideal number as the ensemble size where exploiting more base classifiers brings no significant performance gain, and only increases computational costs. They showed this by using the weighted average area under the ROC curve (AUC), and some dataset density metrics. Fumera et al. \cite{bib:fumera,bib:fumera-ref8} applied an existing analytical framework for the analysis of linearly combined classifiers of bagging, using misclassification probability. Hern\'{a}ndez-Lobato et al. \cite{bib:hernandez2013large} suggested a statistical algorithm for determining the size of an ensemble, by estimating the required number of classifiers for obtaining stable aggregated predictions, using majority voting.

Pietruczuk et al. \cite{pietruczuk2016method,pietruczuk2017adjust} recently studied the automatic adjustment of ensemble size for online environments. Their approach determines whether a new component should be added to the ensemble by using a probability framework and defining a confidence level. However, the diversity impact of component classifiers is not taken into account, and there is a possibility of approaching to the infinite number of components without reaching the confidence level. The assumption that the error distribution is i.i.d can not be guaranteed, especially with a higher ensemble size; this reduces the improvements due to each extra classifier \cite{bib:decbound}.

%%%%%%%%%%%%%%%%%%%%%%%%%%%%%%
%% Table notation
%%%%%%%%%%%%%%%%%%%%%%%%%%%%%%
\begin{table}[t]
    \centering
	\caption{Symbol Notations of the Geometric Framework }
		\begin{tabular}{L{3.41cm}@{\hspace{0.5\tabcolsep}}L{5cm}} 
			\toprule % inserting double-line
			\textbf{Notation} & \textbf{Definition}  \\ 
			\toprule 
			
			$ I=\{I_1, I_2, ..., I_n\} $ 	& Instance window, \(I_i; (1 \leq i \leq n)\) \\
			\midrule
			
			$\xi=\{CS_1, CS_2, ..., CS_m\}$ & Ensemble of $m$ component classifiers, \(CS_j; (1 \leq j \leq m\) and $2 \leq m)$\\
			\midrule
			
			$ C=\{C_1, C_2, ..., C_p\} $    & Classification problem with $p$ class labels, \(C_k; (1 \leq k \leq p\) and $2 \leq p)$\\
			\midrule 
			
			\( s_{ij}=<S_{ij}^1, S_{ij}^2, ..., S_{ij}^p> \) & Score-vector for \( I_i \) and \( CS_j \) \\
			\midrule

			\( o_i = <O^1_i, O^2_i, \cdots, O^p_i> \) & Ideal-point for \( I_i \), \(O^k_i; (1 \leq k \leq p)\) \\	
            \midrule
            
			\( a_i = <A^1_i, A^2_i, \cdots, A^p_i> \) & Centroid-point for \( I_i \), \(A^k_i; (1 \leq k \leq p)\) \\	
            \midrule
            
			\(w = <W_1, W_2, \cdots, W_m> \) & Weight vector for \( \xi \), \(W_j; (1 \leq j \leq m)\) \\	  
			\midrule
            
			\( b_i = <B^1_i, B^2_i, \cdots, B^p_i> \) & Weighted-centroid-point for \( I_i \), \(B^k_i; (1 \leq k \leq p)\) \\
            
			\bottomrule % inserts single-line
		\end{tabular}
	\label{tab:notation}
\end{table}
%%%%%%%%%%%%%%%%%%%%%%%%%%%%%%

\section{A Geometric Framework}
In this section we propose a geometric framework for studying the theoretical side of ensemble classifiers based on \cite{bib:bonabcikm}. We mainly focus on online ensembles since they are more specific models compared to batch-mode ensembles. The main difference is that online ensembles are trained and tested over the course of incoming data while batch-mode ensembles are trained and tested once. As a result, batch-mode ensembles are also applicable to our framework, with a simpler declaration. We use this geometric framework in \cite{bib:bonabtkdd} for aggregating votes and proposing a novel online ensemble for evolving data stream, called GOOWE.

Suppose we have an ensemble of $m$ component classifiers, \(\xi = \{CS_1, CS_2, \cdots , CS_m\}\). Due to resource limitations, we are only able to keep the $n$ latest instances of an incoming data stream as an \emph{instance window}, \(I = \{I_1, I_2, \cdots, I_n\}\), where \(I_n\) is the latest instance and all the true-class labels are available. We assume that our supervised learning problem has $p$ class labels, \(C = \{C_1, C_2, \cdots, C_p\}\). For batch-mode ensembles, $I$ can be considered as the whole set of training data. Table \ref{tab:notation} presents the notation of symbols for our geometric framework.

Our framework uses a $p$-dimensional Euclidean space for modeling the components' votes and true class labels. For a given instance \(I_i (1 \leq i \leq n)\), each component classifier \( CS_j (1 \leq j \leq m)\) returns a \textit{score-vector} as \( s_{ij} = < S^1_{ij}, S^2_{ij}, \cdots, S^p_{ij} >\), where \( \sum_{k=1}^{p} S^k_{ij} = 1 \). Considering all the score-vectors in our $p$-dimensional space, the framework builds a polytope of votes which we call the \textit{score-polytope} of \(I_i\). For the true-class label of \(I_i\) we have \( o_i = <O^1_i, O^2_i, \cdots, O^p_i> \) as the \textit{ideal-point}. Note that $o_i$ is a \emph{one-hot} vector in this study. However, there are other supervised problems this assumption is not true---e.g. multi-label classification. Studying other variations of the supervised problem is out of the scope of this study. A general schema of our geometric framework is presented in Fig. \ref{fig:schema}.

\textit{Example.} Assume we have a supervised problem with 3 class labels, \(C = \{C_1, C_2, C_3\}\). For a given instance \(I_t\), the true-class label is \(C_2\). The ideal-point would be \(o_t = <0, 1, 0>\).

\begin{figure}
	\includegraphics[width=\linewidth]{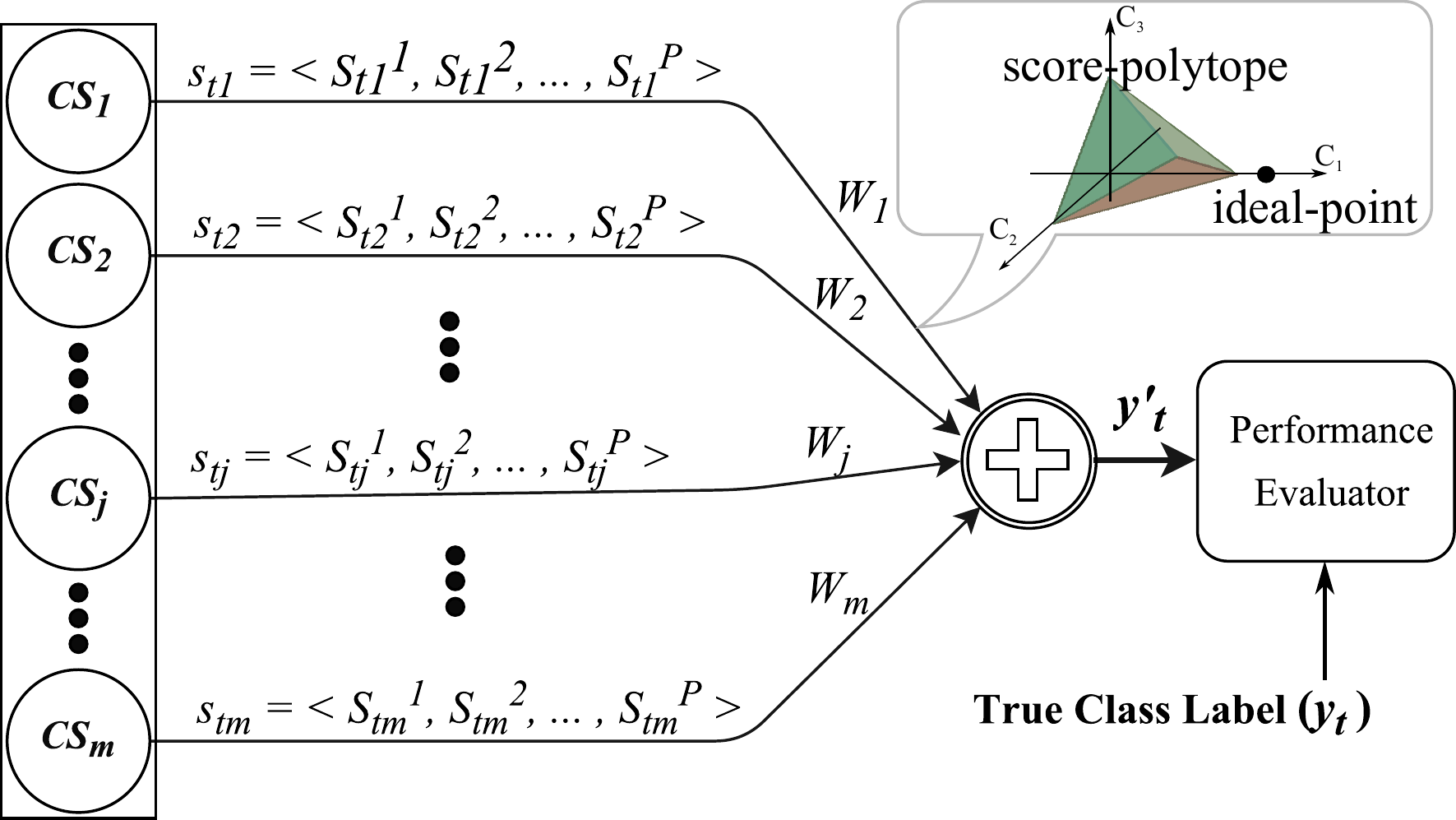}
	\caption{Schema of the geometric framework (obtained from \cite{bib:bonabcikm}). $I_t$ with class label $y_t = C_1$ is fed to the ensemble. Each component classifier, $CS_j$, generates a score-vector, $S_{tj}$. These score-vectors construct a surface in the Euclidean space, called \textit{score-polytope}.} 
	\label{fig:schema}
\end{figure}

One could presumably define many different algebraic rules for vote aggregation \cite{bib:dwm,bib:dwm2}---minimum, maximum, sum, mean, product, median, etc. While these vote aggregation rules can be expressed using our geometric framework, we study the Majority Voting (MV) and Weighted Majority Voting (WMV) aggregation rules in this paper. In addition, individual vote scores can be aggregated based on two different voting schemes \cite{miller1999critic}; \textit{(a) Hard Voting}, the score-vector of a component classifier is first transformed into a one-hot vector, possibly using a hard-max function, and then combined, \textit{(b) Soft Voting,} the score-vector is used for vote aggregation. We use soft voting for our framework.    

The Euclidean norm is used as the loss function, $loss(\cdotp,\cdotp)$, for optimization purposes \cite{bib:lsqbook}. The Euclidean distance of any score-vector and ideal-point expresses the effectiveness of that component for the given instance. Using aggregation rules, we aim to define a mapping function from a score-polytope into a single vector, and measure the effectiveness of our ensemble. Wu and Crestani \cite{wu2015geometric} applied a similar geometric framework for data fusion of information retrieval systems. In this study, some of our theorems are obtained and adapted to ensemble learning from their framework.

\subsection{Majority Voting (MV)}
The mapping of a given score-polytope into its centroid can be expressed as the MV aggregation---plurality voting or averaging. For a given instance, \(I_t\),  we have the following mapping to the \textit{centroid-point}, \( a_t = <A^1_t, A^2_t, \cdots, A^p_t> \).
\begin{equation}
A^k_t = \frac{1}{m} \sum_{j=1}^{m} S^k_{tj} ~~ (1 \leq k \leq p)
\label{eq:centroid}
\end{equation}

\begin{theorem}
	For \(I_t\) the loss between the centroid-point $a_t$ and ideal-point $o_t$ is not greater than the average loss between $m$ score-vectors and $o_t$, that is to say,
	\begin{equation}
    loss(a_t, o_t) \leq \frac{1}{m} \sum_{j=1}^{m} loss(s_{tj}, o_t)
	\label{eq:theorem1}
	\end{equation} 
	\label{th:theorem1}
\end{theorem}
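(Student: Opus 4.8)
The plan is to recognize that this inequality is nothing more than the convexity of the Euclidean norm, i.e., the triangle inequality applied to an average of vectors. First I would rewrite the ideal-point as an average of copies of itself, $o_t = \frac{1}{m}\sum_{j=1}^m o_t$, and combine this with the definition of the centroid-point in Eq.~\eqref{eq:centroid} (working coordinate by coordinate) to express the displacement from the centroid to the ideal-point as the average of the individual displacements:
\[
a_t - o_t = \frac{1}{m}\sum_{j=1}^m \bigl(s_{tj} - o_t\bigr).
\]

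Next I would apply the triangle inequality for the Euclidean norm to the right-hand side,
\[
\left\lVert \frac{1}{m}\sum_{j=1}^m \bigl(s_{tj} - o_t\bigr) \right\rVert_2 \;\leq\; \frac{1}{m}\sum_{j=1}^m \left\lVert s_{tj} - o_t \right\rVert_2 ,
\]
and then observe that $loss(\cdot,\cdot)$ is by definition the Euclidean distance $\lVert\cdot-\cdot\rVert_2$, so the left-hand side equals $loss(a_t,o_t)$ and each summand on the right equals $loss(s_{tj},o_t)$. This is exactly Eq.~\eqref{eq:theorem1}. Equivalently, one could invoke Jensen's inequality, since $x \mapsto \lVert x - o_t \rVert_2$ is convex on $\mathbb{R}^p$ and $a_t$ is the arithmetic mean of the $s_{tj}$, or expand $\sum_j \langle s_{tj}-o_t,\, a_t-o_t\rangle$ and apply Cauchy--Schwarz term by term.

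I do not expect a genuine obstacle here: the argument is purely the triangle inequality and is independent of whether $o_t$ is a one-hot vector (the one-hot assumption plays no role in this particular bound). The only things worth being careful about are (i) making explicit that averaging in Eq.~\eqref{eq:centroid} commutes with subtracting the constant vector $o_t$, which is immediate componentwise, and (ii) noting the equality condition---equality holds precisely when all score-vectors $s_{tj}$ coincide---since this foreshadows the later discussion of why diversity among components matters.
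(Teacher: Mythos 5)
Your proof is correct and is essentially the paper's own argument: the paper sets $\theta_j^k = S^k_{tj} - O^k_t$ and invokes Minkowski's inequality for sums, which is precisely the triangle inequality for the Euclidean norm applied to $\sum_j (s_{tj}-o_t)$, followed by division by $m$. Your added remarks on the equality condition and the irrelevance of the one-hot assumption are accurate but not part of the paper's proof.
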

\begin{proof}
	Based on Minkowski's inequality for sums \cite{abramowitz1964handbook},  
	$$ \sqrt{\sum_{k=1}^{p} (\sum_{j=1}^{m} \theta_j^k )^2 }  \leq  \sum_{j=1}^{m} \sqrt{\sum_{k=1}^{p} (\theta_j^k)^2 } $$
	Letting $ \theta_j^k = S^k_{tj} - O^k_t $ and substituting results   
	$$ \sqrt{\sum_{k=1}^{p} ( m ( \dfrac{1}{m} \sum_{j=1}^{m} (S^k_{tj} - O^k_t ) ) )^2 }  \leq  \sum_{j=1}^{m} \sqrt{\sum_{k=1}^{p} (S^k_{tj} - O^k_t )^2 } 	$$	
	Since $ m > 0 $, we have the following, 
	$$ \sqrt{\sum_{k=1}^{p} ( \dfrac{1}{m} \sum_{j=1}^{m} S^k_{tj} - O^k_t )^2 }  \leq \dfrac{1}{m}  \sum_{j=1}^{m} \sqrt{\sum_{k=1}^{p} (S^k_{tj} - O^k_t)^2 } 	$$
    Using Eq. \ref{eq:centroid} and $loss$ definition, Eq. \ref{eq:theorem1} can be achieved.
\end{proof}
\textit{\textbf{Discussion.}} Theorem 1 shows that the performance of an ensemble with the MV aggregation rule is at least equal to the average performance of all individual components.  

\begin{theorem}
	For \(I_t\), let \(\xi_l =  \xi - \{CS_l\} ~(1 \leq l \leq m)\) be a subset of ensemble \(\xi\) without $CS_l$. Each \(\xi_l\) has $a_{tl}$ as its centroid-point. We have      
	\begin{equation}
	loss(a_t, o_t) \leq \frac{1}{m} \sum_{l=1}^{m} loss(a_{tl}, o_t)
	\label{eq:theorem2}
	\end{equation}
	\label{th:theorem2}
\end{theorem}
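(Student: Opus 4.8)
The plan is to reduce this to Theorem~1 by observing that the full centroid $a_t$ is itself the centroid of the $m$ leave-one-out centroids $a_{tl}$. First I would write out coordinates: $A^k_{tl} = \frac{1}{m-1}\sum_{j \neq l} S^k_{tj}$ for $1 \le k \le p$. Summing over $l$ and counting how often each score contributes --- each $s_{tj}$ is omitted from exactly one subset $\xi_l$ (namely $l=j$), hence appears in $m-1$ of the centroids $a_{tl}$ --- gives $\sum_{l=1}^{m} A^k_{tl} = \frac{1}{m-1}\sum_{l=1}^{m} \sum_{j \neq l} S^k_{tj} = \frac{1}{m-1}(m-1)\sum_{j=1}^{m} S^k_{tj} = \sum_{j=1}^{m} S^k_{tj}$. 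Dividing by $m$ and invoking Eq.~\ref{eq:centroid} yields $\frac{1}{m}\sum_{l=1}^{m} A^k_{tl} = A^k_t$, i.e. $a_t = \frac{1}{m}\sum_{l=1}^{m} a_{tl}$.

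Next I would apply the Minkowski-inequality argument of Theorem~1 verbatim, but with the $m$ points $a_{tl}$ playing the role of the $m$ score-vectors $s_{tj}$. Setting $\theta_l^k = A^k_{tl} - O^k_t$ in Minkowski's inequality for sums and using the identity $a_t = \frac{1}{m}\sum_{l=1}^{m} a_{tl}$ exactly as in the earlier proof gives $\sqrt{\sum_{k=1}^{p} (A^k_t - O^k_t)^2} \le \frac{1}{m}\sum_{l=1}^{m} \sqrt{\sum_{k=1}^{p} (A^k_{tl} - O^k_t)^2}$, which by the definition of $loss(\cdot,\cdot)$ is precisely Eq.~\ref{eq:theorem2}. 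Crucially, the proof of Theorem~1 never used the normalization $\sum_{k=1}^{p} S^k_{tj} = 1$; it used only the averaging structure, so it transfers without modification to the points $a_{tl}$.

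I do not anticipate a serious obstacle here: the one place to be careful is the combinatorial bookkeeping in the first step --- making sure the factor $m-1$ cancels correctly so that the average of the $a_{tl}$ really equals $a_t$ rather than a rescaled version of it. Once that identity is established, the statement is just Theorem~1 re-applied, requiring no new inequality or idea. (As a sanity check, this also shows the companion fact that $a_{tl}$ can be written as $\frac{1}{m-1}(m\,a_t - s_{tl})$, which could be used to give an alternative, more computational derivation if a self-contained argument is preferred.)
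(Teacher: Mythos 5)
Your proof is correct and follows essentially the same route as the paper: reduce to Theorem~1 by recognizing that $a_t$ is the centroid of the $m$ leave-one-out centroids $a_{tl}$. The paper simply asserts this identity ``according to the definition,'' whereas you supply the short counting argument (each $s_{tj}$ appears in exactly $m-1$ of the $a_{tl}$) that justifies it, so your version is, if anything, more complete.
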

\begin{proof}
$a_{t}$ is the centroid-point of all $a_{tl}$ points according to the definition. Assume $\xi_l$ as an individual classifier with score-vector of $a_{tl}$. Theorem \ref{th:theorem1} for every $\xi_l$ results Eq. \ref{eq:theorem2} directly.  
\end{proof}

\textit{\textbf{Discussion.}} Theorem 2 can be generalized for any subset definition with $(m-f)$ components, $1 \leq f \leq (m-2)$. This shows that an ensemble with $m$ components performs better (or at least equal) compared to the average performance of ensembles with $m-f$ components. It can be concluded that better performance can be achieved if we aggregate more component classifiers. However, if we keep adding poor components to the ensemble, it can diminish overall prediction accuracy by increasing the upper bound in Eq. \ref{eq:theorem1}. This is in agreement with the result of the Bayes error reduction analysis \cite{tumer1996error,bib:awe}. Setting a threshold, as expressed in \cite{bib:decbound,pietruczuk2016method,pietruczuk2017adjust,bib:awe}, can give us the ideal number of components for a specific problem.

\subsection{Weighted Majority Voting (WMV)}
For this aggregation rule, a weight vector \(w = <W_1, W_2, \cdots, W_m> \) for components of ensemble is defined, $W_j \ge 0$ and $\sum W_j = 1$ for $1 \leq j \leq m$. For a given instance, \(I_t\), we have the following mapping to the \textit{weighted-centroid-point}, \( b_t = <B^1_t, B^2_t, \cdots, B^p_t> \).   
\begin{equation}
B^k_t = \sum_{j=1}^{m} W_j S^k_{tj} ~~ (1 \leq k \leq p)
\end{equation}

Note that giving equal weights to all the components will result in the MV aggregation rule. WMV presents a flexible aggregation rule. No matter how poor a component classifier is, with a proper weight vector we can cancel its effect on the aggregated results. However, as discussed earlier, this is not true for the MV rule. In the following, we give the formal definition of the optimum weight vector, which we aim to find.

\theoremstyle{definition}
\begin{definition}{\textit{Optimum Weight Vector.}}
	For an ensemble, \(\xi\), and a given instance, \(I_t\), weight vector \(w_o\) with the weighted-centroid-point \(b_o\) is the optimum weight vector where for any \(w_x\) with weighted-centroid-point \(b_x\) the following is true; \( loss(b_o, o_t) \le loss(b_x, o_t)  \).
\end{definition}

\begin{theorem}
	For a given instance, \(I_t\), let the optimum weight vector, $w_o$, and the weighted-centroid-point \(b_t\). The following must hold;
	\begin{equation}
	loss(b_t, o_t) \leq \min \{ loss(s_{t1}, o_t), \ldots, loss(s_{tm}, o_t) \}
	\label{eq:theorem3}
	\end{equation} 	
\end{theorem}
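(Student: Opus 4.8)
The plan is to exploit the fact that the optimum weight vector, by its very definition, beats \emph{every} admissible weight vector, and in particular the $m$ degenerate ones that place all mass on a single component. First I would observe that for each fixed $l \in \{1,\ldots,m\}$ the vector $w_x = w^{(l)}$ defined by $W_l = 1$ and $W_j = 0$ for $j \neq l$ is a legitimate weight vector, since $W_j \ge 0$ and $\sum_{j=1}^{m} W_j = 1$. Plugging $w^{(l)}$ into the formula $B^k_t = \sum_{j=1}^{m} W_j S^k_{tj}$ collapses the sum to $B^k_t = S^k_{tl}$ for every coordinate $k$, so the induced weighted-centroid-point is exactly the score-vector $s_{tl}$.

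Next I would invoke the Definition of the Optimum Weight Vector: for the optimum $w_o$ with weighted-centroid-point $b_t$, we have $loss(b_t, o_t) \le loss(b_x, o_t)$ for \emph{any} weight vector $w_x$. Applying this with $w_x = w^{(l)}$ and using the identification $b_x = s_{tl}$ from the previous step yields $loss(b_t, o_t) \le loss(s_{tl}, o_t)$. Since $l$ was arbitrary, this inequality holds simultaneously for all $l = 1, \ldots, m$, and taking the minimum over $l$ on the right-hand side gives exactly Eq.~\eqref{eq:theorem3}.

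I do not anticipate a genuine obstacle here; the argument is essentially a one-line consequence of the definition once the degenerate weight vectors are recognized as admissible. The only point requiring a small amount of care is checking that these corner weight vectors satisfy the stated constraints ($W_j \ge 0$, $\sum W_j = 1$) so that the optimality hypothesis legitimately applies to them, and noting that the score-vectors $s_{tl}$ themselves lie in the score-polytope, so the bound is tight exactly when the optimum concentrates on the single best component.

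\textbf{Discussion (anticipated).} This theorem says WMV is never worse than the single best component for the current instance, which is a strictly stronger guarantee than Theorem~\ref{th:theorem1} gives for MV (which only bounds by the \emph{average} loss). It also motivates the subsequent development: the gap between $loss(b_t, o_t)$ and $\min_l loss(s_{tl}, o_t)$ is precisely what diversity among components buys, and this is where the role of the number of class labels $p$ and of independence of the components will enter.
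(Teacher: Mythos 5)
Your proposal is correct and follows essentially the same route as the paper: the paper's proof likewise places weight $1$ on the best single component to realize its loss as an achievable value, and then appeals to the optimality of $w_o$ (its case (b) about strict improvement is not needed for the stated inequality). Your write-up is a slightly cleaner formalization of the same idea, invoking the definition against all $m$ degenerate weight vectors at once rather than singling out the best component first.
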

\begin{proof}
  Assume that the least \textit{loss} belongs to component $j$, among $m$ score-vectors. We have the following two cases. 
  
  \noindent \textit{(a) Maintaining the performance.} Simply giving a weight of $1$ to $j$'s component and $0$ for the remaining components result in the equality case; $loss(b_t, o_t) = loss(s_{tj}, o_t)$.
  
  \noindent \textit{(b) Improving the performance.} Using a linear combination of $j$ and other components with proper weights result in a weighted-centroid-point closer to the ideal-point. We can always find such a weight vector in the Euclidean space if other components are not the exact same as $j$. 
\end{proof}

Using the squared Euclidean norm as the measure of closeness for the linear least squares problem (LSQ) \cite{bib:lsqbook} results 
\begin{equation}
\min_w || o -  wS ||^2_2
\end{equation} 
Where for each instance \( I_i\) in the instance window, \( S \in \mathbb{R}^{m \times p} \) is the matrix with score-vectors \(s_{ij}\) in each row corresponding to the component classifier $j$, \(w \in \mathbb{R}^{m}\) is the vector of weights to be determined, and \(o \in \mathbb{R}^p\) is the vector of the ideal-point. We use the following function for our optimization solution.  
\begin{equation}
f(W_1, W_2, \cdots, W_m) =  \sum_{k=1}^{p} (\sum_{j=1}^{m} (W_j S^k_{ij}) - O^k_i)^2
\end{equation}
Taking a partial derivation over \(W_q (1 \leq q \leq m)\), setting the gradient to zero, \( \nabla f = 0 \), and finding optimum points give us the optimum weight vector. Letting the following summations as \(\lambda_{qj}\) and \(\gamma_q\) 

\begin{equation} \label{eq:aqj1}
\lambda_{qj} = \sum_{k=1}^{p} S^k_{iq} S^k_{ij} ,~~(1 \leq q, j \leq m)
\end{equation}
\begin{equation}\label{eq:dq1}
\gamma_q =  \sum_{k=1}^{p} O^k_i S^k_{iq} ,~~(1 \leq q \leq m)
\end{equation}
lead to \(m\) linear equations with \(m\) variables (weights). Briefly, $w\Lambda=\gamma$, where \( \Lambda \in \mathbb{R}^{m \times m} \) is the coefficients matrix and \(\gamma \in \mathbb{R}^{m}\) is the remainders vector---using Eq. \ref{eq:aqj1} and Eq. \ref{eq:dq1}, respectively. The proper weights in the matrix equation are our intended optimum weight vector. For a more detailed explanation see \cite{bib:bonabtkdd}.  

\textit{\textbf{Discussion.}} According to Eq. \ref{eq:aqj1}, \( \Lambda \) is a symmetric square matrix. If \( \Lambda \) has full rank, our problem has a unique solution. On the other hand, in the sense of a least squares solution \cite{bib:lsqbook}, it is probable that \( \Lambda \) is rank-deficient, and we may not have a unique solution. Studying the properties of this matrix lead us to the following theorem.  

\begin{theorem}
	If the number of component classifiers is not equal to the number of class labels, $m \neq p$, then the coefficient matrix would be rank-deficient, $ \det{\Lambda} = 0$.  
	\label{th:theorem4}
\end{theorem}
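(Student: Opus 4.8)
The plan is to identify the coefficient matrix as a Gram matrix and then reduce the statement to an elementary rank computation, treating the inequalities $m>p$ and $m<p$ separately. Collect the score-vectors of instance $I_i$ into the matrix $S\in\mathbb{R}^{m\times p}$ whose $j$-th row is $s_{ij}$. Then Eq.~\ref{eq:aqj1} says exactly that $\lambda_{qj}$ is the inner product of the $q$-th and $j$-th rows of $S$, so $\Lambda = SS^{\top}$; in particular $\Lambda$ is symmetric positive semidefinite. Using the standard identity $\operatorname{rank}(SS^{\top})=\operatorname{rank}(S)$, we get $\operatorname{rank}(\Lambda)=\operatorname{rank}(S)\le\min(m,p)$, so $\Lambda$ can have full rank $m$ only if $\operatorname{rank}(S)=m$, which already forces $m\le p$.

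First I would dispatch the case $m>p$. Here $\operatorname{rank}(\Lambda)=\operatorname{rank}(S)\le p<m$, so $\Lambda$ is singular and $\det\Lambda=0$. Spelled out, the $m$ rows of $S$ are $m$ vectors lying in the $p$-dimensional space $\mathbb{R}^p$ with $p<m$, hence linearly dependent; any nontrivial $c=(c_1,\dots,c_m)$ with $\sum_j c_j\,s_{ij}=0$ then satisfies $\Lambda c = S(S^{\top}c)=\mathbf{0}$, so $c\in\ker\Lambda$. Equivalently, in the least-squares language of the preceding paragraph, such a $c$ can be added to any weight vector without changing $wS$ or the loss, so the optimum weight vector is not unique — which is precisely what $\det\Lambda=0$ records.

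The case $m<p$ is the harder one and is where I expect the real work to be. Now $\operatorname{rank}(\Lambda)\le m$ no longer forces singularity — generic rows of $S$ would be linearly independent — so a naive rank count does not suffice, and one must bring in the remaining structure of the model: every row of $S$ is a point of the probability simplex (nonnegative entries summing to $1$, i.e.\ $S\mathbf{1}_p=\mathbf{1}_m$), together with the normalization $\sum_j W_j=1$ that the WMV weight vector is required to satisfy. The strategy would be to use these simplex/normalization constraints to exhibit a nonzero element of $\ker\Lambda$ — or, more plausibly, to recognize that the system actually governing the constrained least-squares problem is an augmented (KKT-type) matrix carrying an extra $\mathbf{1}$ row and column, and to show that this augmented coefficient matrix is singular unless $m=p$. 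The main obstacle is exactly this step: arguing why the affine constraint on the score-vectors, together with the normalization of the weights, collapses the rank by one whenever $m\ne p$ is the delicate point, and I would not be surprised if it ultimately relies on the paper's running hypothesis that the components are ``strong and independent'' enough to fix the geometry of the score-polytope.
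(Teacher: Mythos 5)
Your identification of $\Lambda = SS^{\top}$ as a Gram matrix and your treatment of the case $m>p$ are correct, and in fact more rigorous than the paper's own argument, which only gestures at dimension counting (``any number of vectors, $m$, more than $p$ is dependent on the basis spanning set''). The genuine gap is the case $m<p$, which you explicitly leave open. You are right that the naive rank count fails there, but the repair you sketch cannot work: the simplex constraint $S\mathbf{1}_p=\mathbf{1}_m$ does not force the rows of $S$ to be linearly dependent. Take $m=2$, $p=3$, $s_{i1}=\langle 1,0,0\rangle$, $s_{i2}=\langle 0,1,0\rangle$; both are valid score-vectors on the probability simplex, yet $\Lambda=SS^{\top}=I_2$ and $\det\Lambda=1\neq 0$. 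So no argument along the lines you propose --- nor any other --- can establish $\det\Lambda=0$ for $m<p$; in that regime the claim is false as literally stated, and your instinct that ``the real work'' lies there was detecting an actual defect rather than a missing lemma.

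For comparison, the paper's proof of the $m<p$ half establishes only that $\operatorname{rank}(S)\le m<p$, i.e., the score-vectors cannot span $\mathbb{R}^p$ and the least-squares residual cannot in general be driven to zero; it silently conflates this deficiency of $S$ as a map into $\mathbb{R}^p$ with singularity of the $m\times m$ normal-equations matrix $\Lambda$, which is a different assertion (and, as the counterexample shows, not an equivalent one). So your diagnosis of where the difficulty sits is exactly right and your $m>p$ argument is the correct, complete version of what the paper intends; but measured against the theorem as stated, your proof is incomplete, and the missing half cannot be supplied without reinterpreting ``rank-deficient'' to refer to the rank of $S$ rather than to $\det\Lambda$.
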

\begin{proof}
	Since we have $p$ dimensions in our Euclidean space, $p$ independent score-vectors would be needed for the basis spanning set. Any number of vectors, $m$, more than $p$ is dependent on the basis spanning set, and any number of vectors, $m$, less than $p$ is insufficient for constructing the basis spanning set.  
\end{proof}

\textit{\textbf{Discussion.}} The above theorem excludes some cases in which we cannot find optimum weights for aggregating votes. There are several numerical solutions for solving rank-deficient least-squares problems (e.g. QR factorization, Singular Value Decomposition (SVD)), however the resulting solution is relatively expensive, may not be unique, and optimality is not guaranteed. Theorem 4's outcome is that the number of independent components for an ensemble is crucial for providing full-rank coefficient matrix, in the aim of an optimal weight vector solution.

\begin{figure*}
	\includegraphics[width=\textwidth]{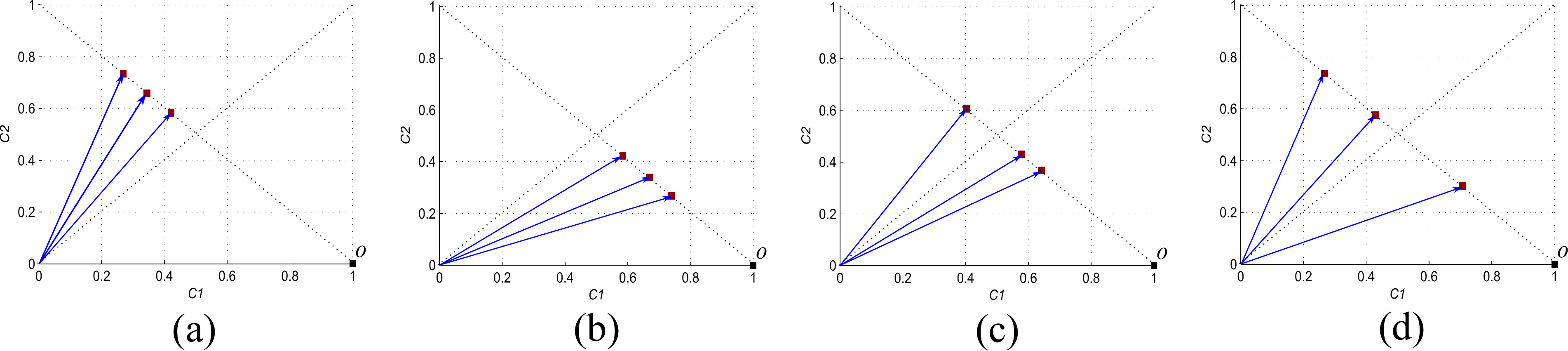}
	\caption{Four score-vector possibilities of an ensemble with size three. The true class label of the instance is $C1$ for a binary classification problem. If two of these score-vectors exactly match each other for several data instances, we cannot consider them to be independent and diverse enough components.  } 
	\label{fig:div}
\end{figure*}

\subsection{Diversity among components}
Theorem \ref{th:theorem4} shows that for weight vector optimality, $m = p$ should be true. However, the reverse cannot be guaranteed in general. Assuming $m = p$ and letting $\det \Lambda = 0$ for the parametric coefficient matrix results in some conditions where we have vote agreement, and cannot find a unique optimum weight vector. As an example, suppose we have two component classifiers for a binary classification task, $m = p = 2$. Letting $\det \Lambda = 0$, results the following equations;
$
S^1_{11} + S^2_{12} = 1$ or $
S^2_{11} + S^1_{12} = 1
$, meaning the agreement of component classifiers---i.e. the exact same vote vectors. More specifically, this suggests another condition for weight vector optimality: the importance of diversity among component classifiers.

Fig. \ref{fig:div} presents four mainly different score-vector possibilities for an ensemble with size three. The true class label of the examined instance is $C1$ for a binary classification problem. All score-vectors are normalized and placed on the main diagonal of the spatial environment. The counter-diagonal line divides the decision boundary for the class label determination based on the probability values. If the component's score-vector is in the lower triangular, it is classified $C1$ and similarly, if it is in the left triangular part it is classified $C2$. Fig. \ref{fig:div} (a) and (b) show the miss-classification and true classification situations, respectively. Fig. \ref{fig:div} (c) and (d) show disagreement among components of the ensemble. 

If for several instances, in a sequence of data, the score-vectors of two components are equal (or act predictably similar), they are considered dependent components. There are several measurements for quantifying this dependency for ensemble classifiers (e.g. Q-statistic) \cite{kuncheva2003measures}. However, most of the measurements in practice use the oracle output of components (i.e. only predicted class labels) \cite{kuncheva2003measures}. Our geometric framework shows a potential importance of using score-vectors for diversity measurements. It is out of the scope of this study to propose a diversity measurement using score-vectors and we leave it as a future work.

To the best of our knowledge, there is no explanatory theory in the literature revealing why and how diversity among components contribute toward overall ensemble accuracy \cite{brown2005diversity,bib:jackowski2018new}. Our geometric modeling of ensemble's score-vectors and the optimum weight vector solution provide a theoretical insight for the commonly agreed upon idea that ``the classifiers should be different from each other, otherwise the overall decision will not be better than the individual decisions'' \cite{kuncheva2003measures}. Optimum weights can be reached when we have the same number of independent and diverse component classifiers as class labels. Diversity has a great impact on the coefficient matrix that consequently impacts the accurate predictions of an ensemble. For the case of majority voting, adding more dependent classifiers will dominate the decision of other components.

\textit{\textbf{Discussion.}} Our geometric framework supports the idea that there is an ideal number of component classifiers for an ensemble, with which we can reach the most accurate results. Increasing or decreasing the number of classifiers from this ideal point may deteriorate predictions, or bring no gain to the overall performance of the ensemble. Having more components than the ideal number of classifiers can mislead the aggregation rule, especially for majority voting. On the other hand, having fewer is insufficient for constructing an ensemble which is stronger than the single classifier. We refer to this situation as ``the law of diminishing returns in ensemble construction.'' 

Our framework suggests the number of class labels of a dataset as the ideal number of component classifiers, with the premise that they generate independent scores and aggregated with optimum weights. However, real-world datasets and existing ensemble classifiers do not guarantee this premise most of the time. Determining the exact value of this ideal point for a given ensemble classifier, over real-world data, is still a challenging problem due to the different complexities of datasets.

\section{Experimental Evaluation}
 The experiments conducted in \cite{bib:bonabcikm} showed that for ensembles trained with a specific dataset, we have an ideal number of components in which having more will deteriorate or at least provide no benefit to our prediction ability. Our extensive experiments in \cite{bib:bonabtkdd} show the practical value of this geometric framework for aggregating votes. 
 
 Here, through a series of experiments, we first investigate the impact of the number of class labels and the number of component classifiers for MV and WMV using a synthetic dataset generator; Then, we study impact of miscellaneous data streams using several real-world and synthetic datasets; Finally, we explore the outcome of our theorems on the diversity of component classifiers and the practical value of our study. All the experiments are implemented using the MOA framework \cite{bib:moa}, and Interleaved-Test-Then-Train is used for accuracy measurements. An instance window, with a length of $500$ instances, is used for keeping the latest instances.

 \begin{figure*}
	\includegraphics[width=\textwidth]{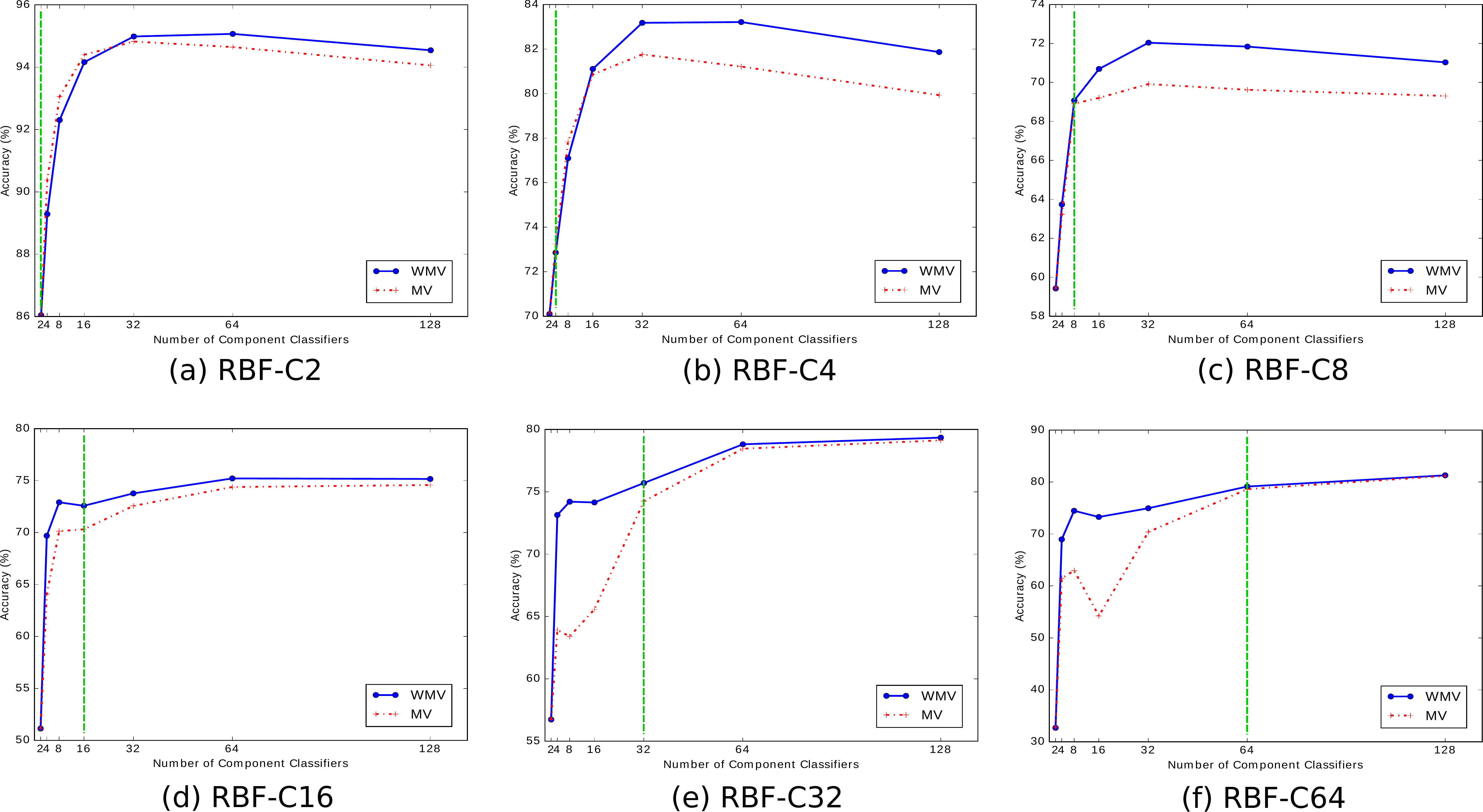}
	\caption{ Prediction behavior of WMV and MV aggregation rules, in terms of accuracy, for RBF-C datasets with increasing both the number of component classifiers, $m$, and the number of class labels, $p$. The equality case, $m=p$, is shown on each plot using a vertical dashed green line.  } 
	\label{fig:rbf-c}
\end{figure*}

\subsection{Impact of Number of Class Labels.}
\textbf{\textit{Setup.}} To investigate the sole impact of the number of class labels of the dataset, i.e. the $p$ value, on the accuracy of an ensemble, we use the GOOWE \cite{bib:bonabtkdd} method. It uses our optimum weight vector calculation for vote aggregation in a WMV procedure for vote aggregation. The Hoeffding Tree (HT) \cite{bib:ht} is used as the component classifier, due to its high adaptivity to data stream classification. For a fair comparison, we modify GOOWE for having MV aggregation rule by simply providing equal weights to the components. These two variations of GOOWE, i.e. WMV and MV, are used for this experiment. Each of these variations trained and tested using different ensemble size values, starting from only two components and doubling at each step---i.e. our investigated ensemble sizes, $m$ values, are $2, 4, \cdots, 128$.

\textbf{\textit{Dataset.}} Since existing real-world datasets are not consistent, in terms of classification complexity, we are only able to use synthetic data for this experiment in order to have reasonable comparisons. We choose the popular Random RBF generator, since it is capable of generating data streams with an arbitrary number of features and class labels \cite{bib:2009}. Using this generator, implemented in the MOA framework \cite{bib:moa}, we prepare six datasets, each containing one million instances with $20$ attributes, with the default parameter settings of the RBF generator. The only difference is the number of class labels among datasets which are $2, 4, 8, 16, 32, \text{~and~} 64$. We reflect this in dataset naming as RBF-C2, RBF-C4, $\cdots$, respectively. 
 
\textbf{\textit{Results.}} Fig. \ref{fig:rbf-c} presents prediction accuracy for WMV and MV with increasing component counts, $m$, on each dataset. To mark the equality of $m$ and $p$, we use a vertical dashed green line. We can make the following interesting observations:
\begin{enumerate*}[label=(\roman*)]
    \item A weighted aggregation rule becomes more vital with an increasing number of component classifiers,
    \item WMV is performing more resiliently in multi-class problems, compared to binary classification problems, when compared to MV. The gap between WMV and MV seems to increase with greater numbers of class labels,
    \item There is a peak point in the accuracy value, and it is dependent on the number of class labels. This can be seen by comparing RBF-C2, RBF-C4, and RBF-C8 (the first row in Fig. \ref{fig:rbf-c}) with RBF-C16, RBF-C32, and RBF-C64 (the second row in Fig. \ref{fig:rbf-c}) plots. In the former set we see that after a peak point, the accuracy starts to drop. However, in the latter set we see that the peak points are with $m=128$, and
    \item The theoretical vertical line, i.e. the equality case $m=p$, seems to precede the peak point on each plot. We suspect that this might be due to Theorem 4's premise conditions: generating independent scores and aggregating with optimum weights.  
\end{enumerate*}

\subsection{Impact of Data Streams.}
\textbf{\textit{Setup.}} There are many factors when the complexity of classification problems are considered---concept drift, the number of features, etc. To this end, we investigate the number of component classifiers for WMV and MV on a wide range of datasets. We use an experimental setup similar to the previous experiments on different synthetic and real-world datasets. We aim to investigate some general patterns in more realistic problems.  

\textbf{\textit{Dataset.}} We select eight synthetic and eight real-world benchmark datasets used for stream classification problems in the literature. A summary of our datasets is given in Table \ref{tab:dataset}. For this selection, we aim to have a mixture of different concept drift types, number of features, number of class labels, and noise percentages. Synthetic datasets are similar to the ones used for the GOOWE evaluation \cite{bib:bonabtkdd}. For real-world datasets, Sensor, PowerSupply, and HyperPlane datasets are taken from\footnote{Access URL: http://www.cse.fau.edu/$\sim$xqzhu/stream.html}. The remainder of real-world datasets are taken from\footnote{Access URL: https://www.openml.org/}. See \cite{bib:bonabcikm,bib:bonabtkdd} for a detailed explanations of the datasets. 

\begin{figure*}
	\includegraphics[width=\textwidth]{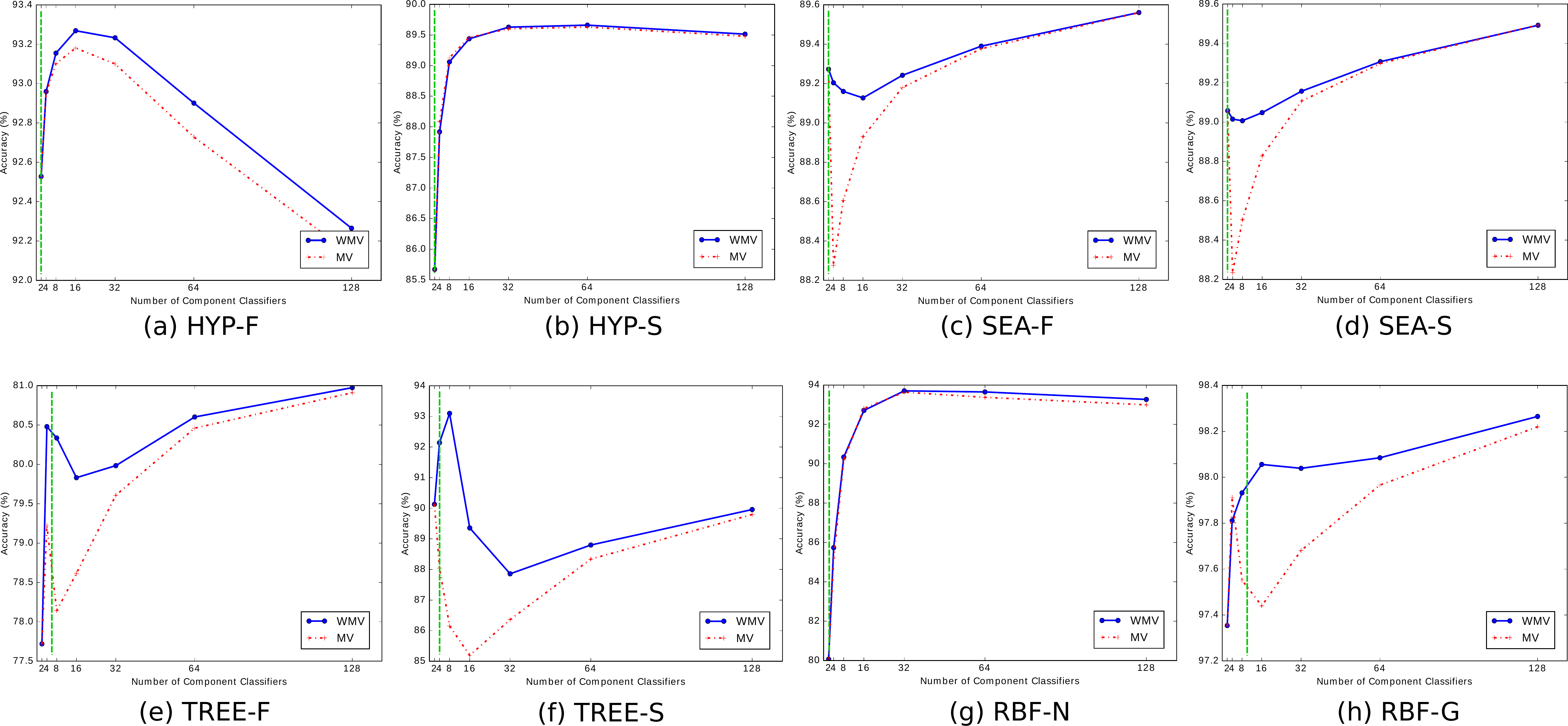}
	\caption{Prediction behavior of WMV and MV aggregation rules, in terms of accuracy, for miscellaneous synthetic datasets, with increasing both the number of component classifiers, $m$, and class labels, $p$. The equality case, $m=p$, is marked on each plot using a vertical dashed green line.  } 
	\label{fig:synth}
\end{figure*}

%%%%%%%%%%%%%%%%%%%%%%%%%%%%%%
%% Table dataset
%%%%%%%%%%%%%%%%%%%%%%%%%%%%%%
\begin{table}
    \centering
    \caption{Summary of Dataset Characteristics}
    \vspace{-0.3cm}
	\begin{tabular}{lccccc} 
		\toprule % inserting double-line
		Dataset & \#Instance & \#Att & \#CL & \%N & Drift Spec. \\ 
		\hline % inserts single-line
		HYP-F & \( 1\times 10^6 \) & 10 & 2 & 5 & Incrm., DS=0.1\\
		HYP-S & \( 1\times 10^6 \) & 10 & 2 & 5 & Incrm., DS=0.001\\
		SEA-F & \( 1\times 10^6 \) & 3 & 2 & 10 & Abrupt, \#D=9\\
		SEA-S & \( 1\times 10^6 \) & 3 & 2 & 10 & Abrupt, \#D=3\\
		TREE-F & \( 1\times 10^6 \) & 10 & 6 & 0 & Reoc., \#D=15\\
		TREE-S & \( 1\times 10^6 \) & 10 & 4 & 0 & Reoc., \#D=4\\
		RBF-G  & \( 1\times 10^6 \) & 20 & 10 & 0 & Gr., DS=0.01\\
		RBF-N  & \( 1\times 10^6 \) & 20 & 2 & 0 & No Drift\\
		\midrule
		Airlines & 539,383 & 7 & 2 & - & Unknown\\
		ClickPrediction & 399,482 & 11 & 2 & - & Unknown\\
		Electricity & 45,311 & 8 & 2  & - & Unknown\\
		HyperPlane & \( 1\times 10^5 \) & 10 & 5  & - & Unknown\\
        CoverType & 581,012 & 54 & 7 & - & Unknown\\
		PokerHand & \( 1\times 10^7 \) & 10 & 10 & - & Unknown\\
		PowerSupply & 29,925 & 2 & 24 & - & Unknown\\
		Sensor  & 2,219,802 & 5 & 58 & - & Unknown\\
		\bottomrule % inserts single-line
		\vspace{-0.1cm}
	\end{tabular}
    \raggedright Note: {\#CL:} {No. of Class Labels,} {\%N:} {Percentage of Noise,} {DS:} {Drift Speed,} {\#D:} {No. of Drifts,} {Gr.:} {Gradual}.
    \label{tab:dataset}
\end{table}
%%%%%%%%%%%%%%%%%%%%%%%%%%%%%%

\textbf{\textit{Results.}}  Fig. \ref{fig:synth} and \ref{fig:reald} present the prediction accuracy difference for WMV and MV for increasing component classifier counts, $m$, on each dataset. For marking the equality of $m$ and $p$, we use a vertical dashed green line, similar to the previous experiments. As we can see, given more broad types of datasets, each with completely different complexities, it is difficult to conclude strict patterns. We have the following interesting observations:
\begin{enumerate*}[label=(\roman*)]
    \item For almost all the datasets, WMV, with optimum weights, outperforms MV, 
    \item We can see the same results as the previous experiments: There is a peak point in the accuracy value and it is dependent on the number of class labels, 
    \item The theoretical vertical line, i.e. the equality case $m=p$, seems to precede the peak point on each plot, and
    \item Optimum weighting seems to be more resilient in the evolving environments, i.e. data streams with concept drift, regardless of the type of concept drift. 
\end{enumerate*}

The observations we have with the real-world data streams provide strong evidence that supports our claim which indicates that the number of class labels has an important influence on the ideal number of component classifiers and prediction performance. In Fig. \ref{fig:reald}, we observe that the peak performances, with one exception, are not observed with the maximum ensemble size. In other words, as we increase the number of component classifiers and move away from the green line and employ an ensemble of size 128, in all cases, prediction performance becomes lower than that of a smaller size ensemble. The only exception is observed with ClickPrediction; even with that one, no noticeable improvement is provided with the largest ensemble size. Furthermore, in all data streams, except ClickPrediction, the peak performances are closer to the green line rather than being closer to the largest ensemble size.

\begin{figure*}
	\includegraphics[width=\textwidth]{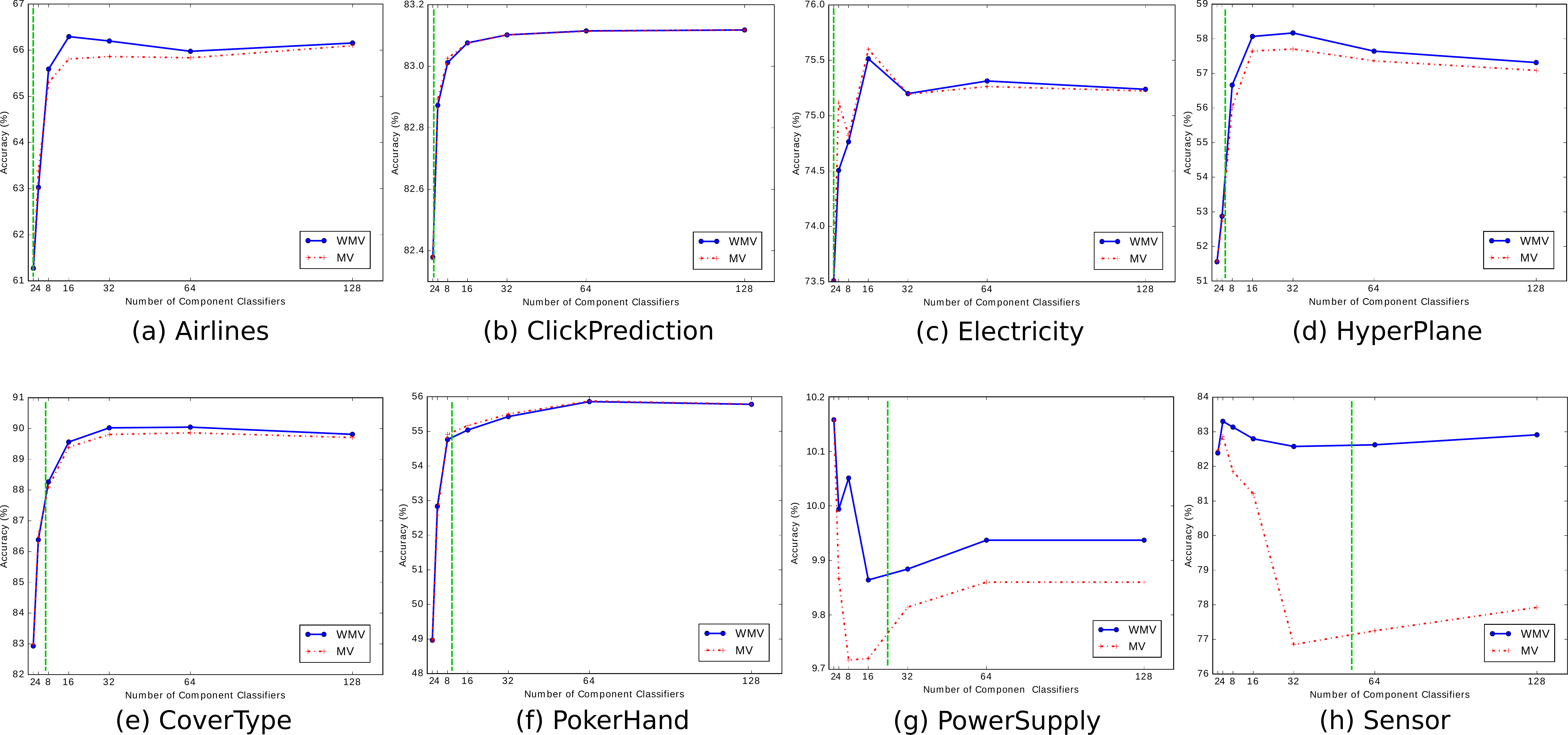}
	\caption{Prediction behavior of WMV and MV aggregation rules, in terms of accuracy, for miscellaneous real-world datasets with increasing both the number of component classifiers, $m$, and class labels, $p$. The equality case, $m=p$, is marked on each plot using a vertical dashed green line. } 
	\label{fig:reald}
\end{figure*}

\subsection{Impact of Diversity.}
 \textbf{\textit{Setup.}} In order to study the impact of diversity in ensemble classifiers and show the practical value of our theorems, we design two different scenarios for the binary classification problem. We select binary classification for the purpose of this experiment since the difference between WMV and MV are almost always insignificant for binary classification, compared to multi-class problems. In addition, multi-class problems can potentially modeled as several binary classification problems \cite{rifkin2004}.  
 
 To this end, we recruit a well-known and state-of-the-art online ensemble classifier, called leverage bagging (LevBag) \cite{bib:lev} as the base ensemble method for our comparisons. It is based on the OzaBagging ensemble \cite{bib:ozaphd,bib:oza2} and is proven to react well in online environments. It exploits re-sampling with replacement (i.e. input randomization), using a $Poisson(\lambda)$ distribution to train diversified component classifiers. 
 
 We use LevBag in our experiments since it initializes a fixed number of component classifiers---i.e. unlike GOOWE, where component classifiers are dynamically added and removed during training in the course of incoming stream data \cite{bib:bonabtkdd}, for LevBag the number of component classifiers are fixed from the initialization and the ensemble does not alter them. In addition, LevBag uses error-correcting output codes for handling multi-class problems, and transforms them into several binary classification problems \cite{bib:lev}. Majority voting is used for vote aggregation, as the baseline of our experiments.

 \textbf{\textit{Design.}} For our analysis, we train different LevBag ensembles with 2, 4, ..., 64, and 128 components of classifiers---named LevBag-2, LevBag-4, $\cdots$, respectively. The Hoeffding Tree (HT) \cite{bib:ht} is used as the component classifier.

We design two experimental scenarios and compare them with LevBag ensembles as baselines. Each scenario is designed to show the practical value of our theorems with different perspectives. Here is a brief description.

\begin{itemize}
	\item \textbf{Scenario 1.} We select the two most diverse components out of a LevBag-10 ensemble's pool of component classifiers, called Sel2Div ensemble, and aggregate their votes. For pairwise diversity measurements among the components of the ensemble, the Yule's Q-statistic \cite{kuncheva2003measures} is used. Minku et al. \cite{bib:minku1} used it for pairwise diversity measurements of online ensemble learning. Q-statistic is measured between all the pairs, and the highest diverse pair is chosen. For two classifiers $CS_r$ and $CS_s$, the Q-statistic is defined as below. $N^{ab}$ is the number of instances in the instance window that $CS_r$ predicts $a$ and $CS_s$ predicts $b$.  
	\[
		Q_{r,s} = \dfrac{ N^{11}N^{00} - N^{01}N^{10} }{N^{11}N^{00} + N^{01}N^{10}}
	\]
	\item \textbf{Scenario 2.} We train a hybrid of two different algorithms as component classifiers, a potentially diverse ensemble. For this, one instance of the Hoeffding Tree (HT) and the Naive Bayes (NB) \cite{bib:2009} algorithms are exploited; both are trained on the same instances of data stream---without input randomization. We call this the Hyb-HTNB ensemble. 
\end{itemize}

%%%%%%%%%%%%%%%%%%%%%%%%%%%%%%
%% Table Experimental Results
%%%%%%%%%%%%%%%%%%%%%%%%%%%%%%
\begin{table*}[t]\centering
	\caption{Classification Accuracy in Percentage (\%)---the highest accuracy for each dataset is bold} 
	\label{table:expres}
	\addtolength{\tabcolsep}{-2pt} % adjust to fit
	\vspace{-0.3cm}
	\begin{tabular}{lccccccccc}
		\toprule
		& \multicolumn{7}{c}{LevBag} & \multicolumn{1}{c}{Select 2 most diverse} & \multicolumn{1}{c}{Hybrid of HT and NB}\\
		\cmidrule(lr){2-8}  \cmidrule(lr){9-9} \cmidrule(lr){10-10}
		Dataset         & 2 & 4 & 8 & 16 & 32 & 64 & 128 & 2 & 2  \\
		\midrule
		
		Airlines            & 85.955 & 86.747  & 87.455 & 88.136  & 88.527 & 89.516 & \textbf{90.638} & 88.430 & 86.392 \\
		ClickPrediction          & 95.395 & 95.516  & 95.515 & 95.531  & \textbf{95.532} & 95.524 & 95.525 & 95.520 & 95.524 \\
		Electricity          & 83.481 & 84.299  & 84.172 & 84.842  & 84.332 & 84.797 & 84.906 & \textbf{85.406} & 82.538 \\
		\midrule
		RBF            & 78.391 & 79.210  & 79.750 & 79.789  & 80.065 & 80.321 &\textbf{ 80.339} & 78.575 & 77.483 \\
		SEA            & 86.011 & \textbf{86.354} & 86.070 & 86.246  & 85.387 & 84.976 & 84.872 & 86.166 & 84.650 \\
		HYP            & 87.620 & 87.957  & \textbf{88.839} & 88.388  & 88.292 & 88.176 & 88.313 & 87.957 & 88.283 \\
		\bottomrule	
	\end{tabular}
\end{table*}

%%%%%%%%%%%%%%%%%%%%%%%%%%%%%%
%% Table dataset
%%%%%%%%%%%%%%%%%%%%%%%%%%%%%%
\begin{table}[!t]
	\caption{The Multiple Comparisons for Friedman Statistical Test Results. Minimum required difference of mean rank is \textbf{2.635}. Higher mean rank means better performance. } 
	\label{tab:stat}
	\centering % centering table
	\vspace{-0.3cm}
	\begin{tabular}{llcl} 
		\toprule % inserting double-line
		Idx & Ensemble & Mean Rank & Different From ($P<0.05$)  \\ 
		\midrule
		\textbf{(1)}& LevBag-2  & 2.000 & (3)(4)(5)(6)(7)(8) \\
		\textbf{(2)}& LevBag-4  & 4.250 & (4)(7) \\
		\textbf{(3)}& LevBag-8  & 4.833 & (1) \\		
		\textbf{(4)}& LevBag-16 & 7.000 & (1)(2)(9) \\
		\textbf{(5)}& LevBag-32 & 6.333 & (1)(9) \\		
		\textbf{(6)}& LevBag-64 & 5.750 & (1)(9) \\		
		\textbf{(7)}& LevBag-128& 7.000 & (1)(2)(9) \\		
		\textbf{(8)}& Sel2Div   & 5.250 & (1)(9) \\
		\textbf{(9)}& Hyb-HTNB  & 2.583 & (4)(5)(6)(7)(8) \\
		\bottomrule % inserts single-line
	\end{tabular}
\end{table}
%%%%%%%%%%%%%%%%%%%%%%%%%%%%%%

The ensemble sizes for both of these scenarios are two. For each instance, vote aggregation in both scenarios is done using our geometric weighting framework. An instance window of 100 latest incoming instances are kept, and using Eq. \ref{eq:aqj1} and \ref{eq:dq1} weights are calculated---$w\Lambda=\gamma$. 

 \textbf{\textit{Dataset.}}  We examine our experiments using three real-world and three synthetic data streams, all with two-class labels. For real-world datasets, we use the exact same real-world datasets with two class labels as with the previous experiments, see Table \ref{tab:dataset}. For synthetic datasets, we generate 500,000 instances of RBF, SEA, and HYP stream generator from the MOA framework \cite{bib:moa}. All the setting of these generators are set to the default, except for the number of class labels, which is two.

\textbf{\textit{Results.}} 
Table \ref{table:expres} shows the prediction accuracy of different ensemble sizes and experimental scenarios for examined datasets. The highest accuracy for each dataset are bold. We can see that the ensemble size and component selection has a crucial impact on accuracy of prediction. 

o differentiate the significance of differences in accuracy values, we exploited the non-parametric Friedman statistical test, with $\alpha=0.05$ and $F(8,40)$. The \textit{null-hypothesis} for this statistical test claims that there is no statistically significant difference among all examined ensembles, in terms of accuracy. The resulting two-tailed probability value, $P=0.002$, rejects the null-hypothesis and shows that the differences are significant.

The Friedman multiple pairwise comparisons are conducted and presented in Table \ref{tab:stat}. It can be seen that there is no significant difference among LevBag-8, LevBag-16, LevBag-32, LevBag-64, LevBag-128, and Sel2Div ensembles. Given that all are trained using the same component classifier, the impact of this result is important; only 2 base classifiers can be comparably good with 128 of them, when they trained in a diverse enough fashion and weighted optimally.

On the other hand, the Hyb-HTNB ensemble performs equivalently as good as LevBag-2, LevBag-4, and LevBag-8, according to statistical significance tests. Hyb-HTNB is a naturally diverse ensemble; we included this in our experiment to show the impact of diversity on prediction accuracy. Since NB is a weak classifier compared to HT, it is reasonable that Hyb-HTNB is not performing as good as the Sel2Div ensemble.

\section{Conclusion}
In this paper, we studied the impact of ensemble size using a geometric framework. The entire decision making process through voting is adapted to a spatial environment and weighting combination rules, including majority voting, are considered for providing better insight. The focus of study is on online ensembles, however nothing prevents us from using the proposed model on batch ensembles. 

% The ensemble size is crucial for online environments, due to the dimensionality growth of data. We discussed the effect of ensemble size with majority voting and optimal weighted voting aggregation rules. The highly important outcome is that we do not need to train a near-infinite number of components to have a good ensemble. 

The ensemble size is crucial for online environments, due to the dimensionality growth of data. We discussed the effect of ensemble size with majority voting and optimal weighted voting aggregation rules. The highly important outcome is that we do not need to train a near-infinite number of components to have a good ensemble. 

We delivered a framework which heightens the understanding of the diversity, and explains why diversity contributes to the accuracy of predictions. 

% Our experimental evaluations showed the practical value of our theorems, and highlighted existing challenges. Practical imperfections across different algorithms and different learning complexities on our various datasets prevent us to clearly show that $m = p$ and diversity are the core decisions to be used in the ensemble design. However, the experimental results show that the number of class labels has an important effect on the ensemble size. 

Our experimental evaluations showed the practical value of our theorems, and highlighted existing challenges. Practical imperfections across different algorithms and different learning complexities on our various datasets prevent us to clearly show that $m = p$ and diversity are the core decisions to be used in the ensemble design. However, the experimental results show that the number of class labels has an important effect on the ensemble size. For example, in 7 out of 8 real world datasets, the peak performances are closer to the ideal $m = p$ point rather than being closer to the largest ensemble size.

As a future work, we aim to define some diversity measures based on this framework, while also studying the coefficient matrix specifications.

\balance

% if have a single appendix:
%\appendix[Proof of the Zonklar Equations]
% or
%\appendix  % for no appendix heading
% do not use \section anymore after \appendix, only \section*
% is possibly needed

% use appendices with more than one appendix
% then use \section to start each appendix
% you must declare a \section before using any
% \subsection or using \label (\appendices by itself
% starts a section numbered zero.)
%

%\appendices
%\section{Proof of the First }
%Appendix one text goes here.
%
%% you can choose not to have a title for an appendix
%% if you want by leaving the argument blank
%\section{}
%Appendix two text goes here.

% use section* for acknowledgment
%\section*{Acknowledgment}
%The authors would like to thank...

% Can use something like this to put references on a page
% by themselves when using endfloat and the captionsoff option.
\ifCLASSOPTIONcaptionsoff
  \newpage
\fi

%\newpage
% trigger a \newpage just before the given reference
% number - used to balance the columns on the last page
% adjust value as needed - may need to be readjusted if
% the document is modified later
%\IEEEtriggeratref{8}
% The "triggered" command can be changed if desired:
%\IEEEtriggercmd{\enlargethispage{-5in}}

% references section

% can use a bibliography generated by BibTeX as a .bbl file
% BibTeX documentation can be easily obtained at:
% http://mirror.ctan.org/biblio/bibtex/contrib/doc/
% The IEEEtran BibTeX style support page is at:
% http://www.michaelshell.org/tex/ieeetran/bibtex/
%\bibliographystyle{IEEEtran}
% argument is your BibTeX string definitions and bibliography database(s)
%\bibliography{IEEEabrv,../bib/paper}
%
% <OR> manually copy in the resultant .bbl file
% set second argument of \begin to the number of references
% (used to reserve space for the reference number labels box)

\bibliographystyle{IEEEtran}
\bibliography{biblist}

% biography section
% 
% If you have an EPS/PDF photo (graphicx package needed) extra braces are
% needed around the contents of the optional argument to biography to prevent
% the LaTeX parser from getting confused when it sees the complicated
% \includegraphics command within an optional argument. (You could create
% your own custom macro containing the \includegraphics command to make things
% simpler here.)
%\begin{IEEEbiography}[{\includegraphics[width=1in,height=1.25in,clip,keepaspectratio]{mshell}}]{Michael Shell}
% or if you just want to reserve a space for a photo:

\begin{IEEEbiography}[{\includegraphics[width=1in,height=1.25in,clip,keepaspectratio]{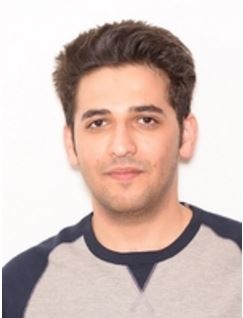}}]{Hamed Bonab}
received BS and MS degrees in computer engineering from the Iran University of Science and Technology (IUST), Tehran-Iran, and Bilkent University, Ankara-Turkey, respectively. Currently, he is a PhD student in the College of Information and Computer Sciences at the University of Massachusetts Amherst, MA. His research interests broadly include stream processing, data mining, machine learning, and information retrieval.
\end{IEEEbiography}

\begin{IEEEbiography}[{\includegraphics[width=1in,height=1.25in,clip,keepaspectratio]{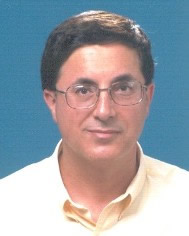}}]{Fazli Can} received PhD degree (1985) in computer engineering, from the Middle East Technical University (METU), Ankara, Turkey. His BS and MS degrees, respectively, on electrical \& electronics and computer engineering are also from METU. He conducted his PhD research under the supervision of Prof. Esen Ozkarahan; at Arizona State University (ASU) and Intel, AZ; as a part of the RAP database machine project. He is a faculty member at Bilkent University, Ankara, Turkey. Before joining Bilkent he was a tenured full professor at Miami University, Oxford, OH. He co-edited ACM SIGIR Forum (1995-2002) and is a co-founder of the Bilkent Information Retrieval Group. His research interests are information retrieval and data mining. His interest in dynamic information processing dates back to his 1993 incremental clustering paper in ACM TOIS.
\end{IEEEbiography}

%\vfill

% Can be used to pull up biographies so that the bottom of the last one
% is flush with the other column.
%\enlargethispage{-5in}

\end{document}